\definecolor{lucasblue}{HTML}{046789} %
\pgfplotsset{compat=newest}
\definecolor{Gray}{gray}{0.8}
\definecolor{LightCyan}{rgb}{0.88,1,1}
\definecolor{ForestGreen}{RGB}{34,139,34}
\definecolor{green(html/cssgreen)}{rgb}{0.0, 0.5, 0.0}
\newtheorem{theorem}{Theorem} 
\newtheorem{theorem*}{Theorem}
\newtheorem{lemma}{Lemma}
\newtheorem{definition}{Definition}
\newtheorem{prop}{Proposition}
\newmdtheoremenv{theo}{Theorem}
\let\oldstar\*
\renewcommand{\*}[1]{\mathbf{#1}}
\newcommand{\pp}[2]{\frac{\partial #1}{\partial #2}}
\newcommand{\fR}{\mathbb{R}}
\newcommand*{\dt}[1]{\accentset{\raisebox{1pt}{\scalebox{0.4}{$\bullet$}}}{#1}}
\newcommand{\tp}{\top}
\title{Is All Learning (Natural) Gradient Descent?}
\author[1]{Lucas Shoji}
\author[2]{Kenta Suzuki}
\author[3,4,*]{Leo Kozachkov}
\affil[1]{Department of Physics, MIT}
\affil[2]{Department of Mathematics, MIT}
\affil[3]{Thomas J. Watson Research Center, IBM Research}
\affil[4]{Department of Brain and Cognitive Sciences, MIT}
\affil[*]{Corresponding Author}
\affil[ ]{\texttt{leokoz8@[ibm.com,mit.edu]}}
\date{}
\begin{document}
\maketitle
\begin{abstract}
This paper shows that a wide class of effective learning rules—those that improve a scalar performance measure over a given time window—can be rewritten as natural gradient descent with respect to a suitably defined loss function and metric. Specifically, we show that parameter updates within this class of learning rules can be expressed as the product of a symmetric positive definite matrix (i.e., a metric) and the negative gradient of a loss function.

We also demonstrate that these metrics have a canonical form and identify several optimal ones, including the metric that achieves the minimum possible condition number. The proofs of the main results are straightforward, relying only on elementary linear algebra and calculus, and are applicable to continuous-time, discrete-time, stochastic, and higher-order learning rules, as well as loss functions that explicitly depend on time.
\end{abstract}

\section{Introduction}
Finding biologically plausible learning rules for ecologically-relevant tasks is a major goal in neuroscience \citep{bredenberg2024desiderata,richards2023study}, just as identifying effective training rules for large-scale neural networks is in 
machine learning and artificial intelligence. This paper does not offer either. Instead, we demonstrate that \textit{if} such rules are found, then under fairly mild assumptions (i.e., continuous or small updates), they can be written in a very specific form: the product of a symmetric, positive definite matrix and the negative gradient of a loss function. This can be interpreted as performing steepest descent with a non-Euclidean metric (Figure \ref{fig:contour_NG}).

\begin{figure}
    \centering
    \input{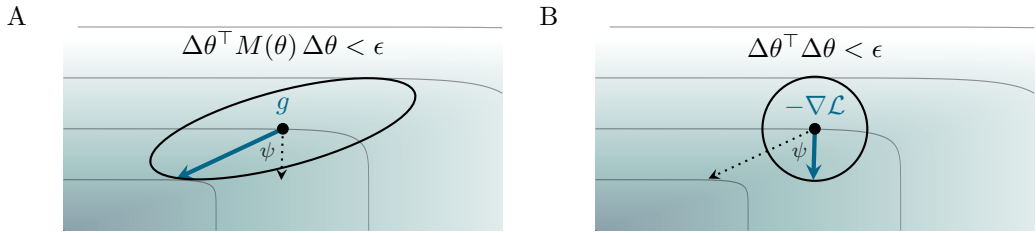} %
    \caption{\small A) Contour lines of a loss function (darker colors = lower loss). Parameters update in the direction of $g$. If this update decreases the loss, and if the step-size is small, $g$ is equivalent to steepest descent with a non-Euclidean metric, $M(\theta)$. In this case, the angle $\psi$ between $g$ and the negative gradient is acute. Ellipse: $\epsilon$-ball in this metric. B) Steepest descent with the Euclidean metric. Circle: $\epsilon$-ball in this metric. }
    \label{fig:contour_NG}
\end{figure}

It is well-known that if a learning rule updates parameters by following the negative gradient of a loss function, the loss decreases along the parameter trajectories \citep{cauchy1847methode,nocedal1999numerical}. However, many learning rules do not fit this ``pure'' gradient descent form. Indeed, there are compelling reasons to believe that the brain's learning rules \textit{cannot} be expressed as pure gradient descent \citep{suraceChoiceMetricGradientbased2018,lillicrap2016random,bredenberg2024desiderata}.  Fortunately, there are many ways to decrease a loss function beyond traditional gradient descent. One notable class of algorithms, which we focus on in this paper, is natural gradient descent \citep{amari1998natural}.

In natural gradient algorithms, parameter updates are written as the product of a symmetric positive definite matrix and the negative gradient. If a learning rule can be expressed in this form, it is considered ``effective'' because it guarantees improvement of a scalar performance measure over time (assuming small step sizes). Given the flexibility of choosing the positive definite matrix, one can ask the converse question: if a learning rule is effective, can it be written as natural gradient descent? We show that for a wide class of effective learning rules this is indeed the case. For example, our results hold for \textit{all} effective continuous-time learning rules. This leads us to conjecture that \textit{any} sequence of updates which improves a scalar measure of performance can be written in natural gradient form.

\paragraph{Formal Setting}
We consider a set of $D$ real numbers $\theta \in \fR^D$ which parameterize the function of a system. In the case of biology, these numbers can represent biophysical variables such as synaptic diffusion constants or receptor densities \citep{richards2023study}. In the case of artificial neural networks, these numbers can represent synaptic weights between units. We analyze two common methods for updating $\theta$ towards the goal of improving performance on a task (or set of tasks): continuous-time evolution and discrete-time updates. In the former, $\theta$ evolves continuously according to a flow
\begin{equation}\label{eq:parameter_flow}
\frac{d\theta}{dt} = g(\theta,t)
\end{equation}
where $g(\theta,t)$ is a potentially nonlinear, time-dependent function. At this stage, we impose no restrictions on this function (e.g., smoothness). In discrete-time updates, changes to $\theta$ occur at discrete time intervals
\begin{equation}\label{eq: discrete_LR}
\theta_{t+1} = \theta_{t} + \eta \, g(\theta_t,t)
\end{equation}
where $\eta > 0$ is a learning rate parameter. This setting is general enough to capture supervised learning, self-supervised learning, as well as in-context learning (where $t$ may be identified with layers in a neural network). Also note that \eqref{eq:parameter_flow} and \eqref{eq: discrete_LR} include techniques which rely on defining higher-order derivatives of $\theta$, such as accelerated gradient methods \citep{muehlebach2019dynamical}. In this case, one can arrive back at the form of \eqref{eq:parameter_flow} and \eqref{eq: discrete_LR} by expanding the state space \footnote{E.g., for second-order methods, define the extended state space $\begin{bmatrix} v & \theta \end{bmatrix}$, where $v \coloneq \dot{\theta}$.}.

\paragraph{Effective Learning Rules Do Not Require Monotonic Improvement}
We assume that each $\theta$ can be associated with a system which performs some task. For example, suppose $\theta$ contains the weights of a neural network after training. This neural network can then be evaluated based on its performance on some task. We define an effective learning rule as one which leads to the improvement of a scalar performance measure over some time window. We will use the loss $\mathcal{L}$ to measure performance, so that ``improvement" means the loss decreases after time $m$ has elapsed
\begin{equation}\label{eq:learning}
\mathcal{L}(t+m) < \mathcal{L}(t).   
\end{equation}
Note that this definition does not require monotonic improvement in the performance measure. In particular, \eqref{eq:learning} allows for temporary setbacks, i.e., $d\mathcal{L}/dt > 0$, so long as the setbacks do not outweigh the progress on average. This includes, for example, learning rules that take ``one step backwards, two step forwards". Note also that although the loss does not decrease monotonically along trajectories of \eqref{eq:parameter_flow}, the average loss $\mathcal{L}_{\text{avg}}$ does, because
\[\mathcal{L}_{\text{avg}} \coloneqq \frac{1}{m} \ \int_{t}^{t+m} \mathcal{L}(s) \ ds \qquad \implies \qquad \dt{\mathcal{L}}_{\text{avg}} = \frac{\mathcal{L}(t+m) - \mathcal{L}(t)}{m} < 0  \]
where the inequality was obtained by using assumption \eqref{eq:learning}. The same argument can be applied to discrete-time updates. In this case, the average loss continually improves, because 
\[ \mathcal{L}_{\text{avg}}(t) = \frac{1}{m}\sum_{\tau=t}^{t+m -1 } \mathcal{L}(\tau) \qquad \implies \qquad \mathcal{L}_{\text{avg}}(t + 1) - \mathcal{L}_{\text{avg}}(t) = \frac{\mathcal{L}(t+m) - \mathcal{L}(t)}{m} < 0. \]
Therefore, for the remainder of the paper we will assume without loss of generality that the loss function $\mathcal{L}$ \textit{does} monotonically decrease. Also note that while the average loss is a particularly convenient measure of asymptotic improvement, we can in fact consider much more general measures that guarantee asymptotic improvement of a performance measure without continual improvement--for example, by considering a sequence of Lyapunov functions as done by \citet{ahmadi2008non}. Finally, while we only consider differentiable loss functions in this paper, analogous results hold for non-differentiable losses using suitable replacements for the gradient of the loss \citep{clarke1975generalized}.

\begin{figure}[ht!]
    \includegraphics[width=1.0\textwidth]{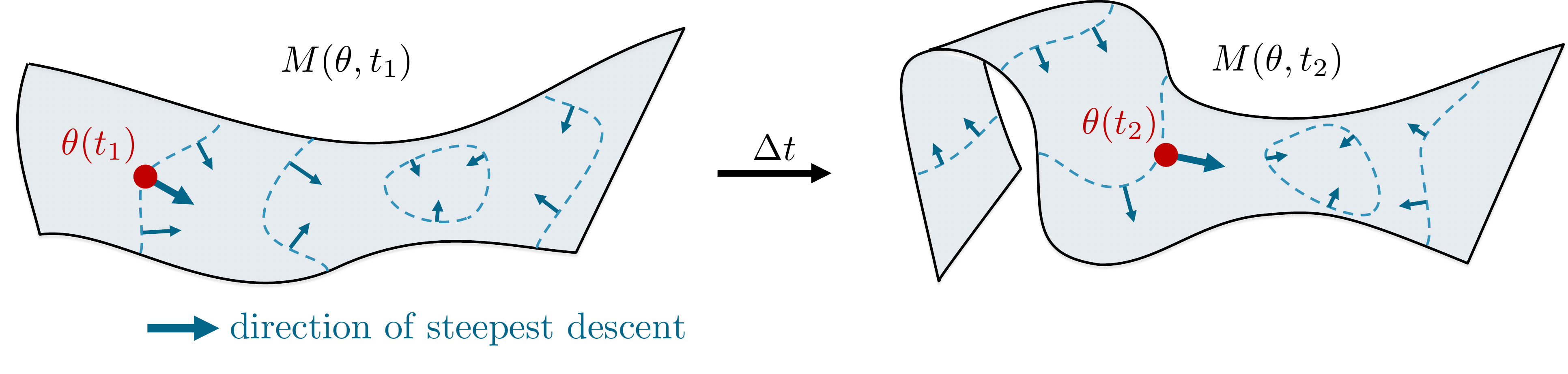}
    \caption{Natural gradient descent minimizes a loss function (dashed contours) by evolving the parameters $\theta$ in the direction of steepest descent in a non-Euclidean space. This space, a $D$-dimensional manifold with metric $M(\theta, t)$, is visualized as a surface embedded in a higher dimensional Euclidean space. We demonstrate that a wide class of learning rules that decreases the loss function (not necessarily monotonically) fits this framework. In this context, the dynamics of both $\theta$ and $M$ are determined by the learning rule and the loss function.}
    \label{fig:manifold}
\end{figure}

\paragraph{(Natural) Gradient Descent}
Gradient descent is a prototypical algorithm for decreasing a loss function. However it is by no means the only algorithm which does so. An important generalization of gradient descent is \textit{natural} gradient descent \citep{amari1998natural}
\begin{equation}\label{eq:natural_gradient_descent}
\dt{\theta} = -M^{-1}\left(\theta,t\right)\, \nabla_{\theta} \mathcal{L} 
\end{equation}
where $M(\theta,t)$ is some symmetric positive definite matrix\footnote{Technically, this is natural gradient \textit{flow}, called a metric. We will use the term descent to refer to both continuous and discrete updates.}. To see that natural gradient descent indeed decreases the loss $\mathcal{L}$ in continuous-time, suppose that $\theta$ is not at a stationary point, i.e., $\|\nabla_{\theta}\mathcal{L}\| > 0$. Then, 
\begin{equation}\label{eq:gradient_descent}
\dt{\mathcal{L}} =  \nabla_{\theta} \mathcal{L}^\tp \ \dt{\theta} = -\nabla_{\theta} \mathcal{L}^\tp M^{-1}(\theta,t) \ \nabla_{\theta} \mathcal{L} \leq -\frac{\|\nabla_{\theta}\mathcal{L}(s)\|^2}{\lambda_{\max}(M)} < 0,
\end{equation}
where $\lambda_{\max}(M) > 0$ denotes the largest eigenvalue of $M$. The first equality follows from the chain rule, the second equality follows from substituting in \eqref{eq:natural_gradient_descent}, and the inequality is obtained by using the Rayleigh quotient \citep{horn2012matrix}. The above conclusion also holds in discrete-time, for sufficiently small learning rate $\eta$.

There are two interesting connections between natural gradient descent and gradient descent. The first is that in the special case when $M = I$, natural gradient descent reduces to gradient descent. The second connection is that both gradient descent and natural gradient descent perform steepest descent: the negative gradient is the direction of steepest descent in Euclidean space, whereas the negative natural gradient denotes the direction of steepest descent in some non-Euclidean space. In particular, in a space where unit lengths at point $\theta$ satisfy
\[a^\tp M(\theta,t) \ a = 1, \]
which is why $M$ is called a metric. Since the metric may change over time, the geometry underlying learning is itself dynamic, evolving alongside the parameters (Figure \ref{fig:manifold}). Natural gradients underlie many techniques in machine learning and optimization \citep{kakade2001natural,pascanu2013revisiting, martens2015optimizing, martens2020new,dangel2024kronecker,wensingConvexityContractionGlobal2020,ollivier2017information}, control theory \citep{lee2018natural,boffi2021implicit,tzen2023variational,wensingConvexityContractionGlobal2020}, and, more recently, have enjoyed renewed interest in neuroscience \citep{suraceChoiceMetricGradientbased2018,pogodinSynapticWeightDistributions2023,bredenberg2024desiderata}.

\section{Main Results} 
\subsection{Continuous-Time Learning Rules}\label{sec: cont_time}
To streamline the notation, we define \( y \) as the negative gradient of \( \mathcal{L} \) and the update vector \( g(\theta,t) \), defined in \eqref{eq:parameter_flow}, as \( g \). This allows us to express the monotonic decrease of the loss function more concisely as $y^\tp g > 0$. Our goal is to find a symmetric positive definite matrix \( M \) that maps \( g \) to \( y \), which ensures that \( g \) can be written in the natural gradient form
\[Mg = y \qquad \iff \qquad  g = -M^{-1} \, \nabla_\theta \mathcal{L}.   \]
Towards this goal, consider the matrix
\begin{equation}\label{eq:metric}
    M = \frac{1}{y^\tp g }yy^\tp + \sum_{i=1}^{D-1} u_i u_i^\tp.
\end{equation}
Here, the vectors \( u_i \) are chosen to span the subspace orthogonal to \( g \), denoted by \( g^\perp \coloneq \{v \in \mathbb{R}^n : v^\tp g = 0\} \). As desired, \( M \) maps the update vector \( g \) to the negative gradient direction $y$. By construction, \( M \) is symmetric and positive definite. Indeed, for any non-zero vector \( x \) we have
\[ \qquad x^\tp M x = \frac{1}{y^\tp  g }\big(x^\tp  y\big)^2 + \sum_{i=1}^{D-1} \big(x^\tp  u_i\big)^2 \, > 0. \]
The inequality holds because \( x \) cannot be simultaneously orthogonal to both \( y \) and all the \( u_i \), as this would contradict the assumption that \( y^\tp g > 0 \). Later on, for a special family of metrics, we will derive the full spectrum of $M$.

\paragraph{Canonical form of the Metric} 

We will now show that \emph{any} symmetric, positive definite matrix $M$ such that $Mg = y$, with $g^\tp  y > 0$, is of the form given in \eqref{eq:metric}. 
\begin{proof}
    Let $M$ satisfy the requirements given above. Define
    $$M' \coloneq M - \frac{1}{y^\tp  g}y y^\tp $$
    which is a symmetric matrix. We claim that for any nonzero $u \in g^\perp$,
    $$u^\tp  M' u > 0.$$
    If so, since the matrix is symmetric and an ``orthogonal" eigendecomposition exists, it follows that $M'$ is of the form $\sum_{i=1}^{D-1} u_iu_i^\tp$ for some basis $\{u_i\}$ of $g^\perp$, proving the canonical form. To show this, first note that
    \begin{equation}\label{eq:M'g=0}
    M'g = Mg - \frac{1}{y^\tp  g}y y^\tp g = 0.
    \end{equation}
    Now take an arbitrary nonzero $u \in g^\perp$. Consider the projection of $u$ to $y^\perp$ along $g$
    $$u' = u - \frac{y^\tp  u}{y^\tp  g}g,$$
    which is nonzero and orthogonal to $y$.\footnote{This projection is well-defined by the assumption of effective learning, i.e., that $y^\tp g > 0$. } Together with \eqref{eq:M'g=0} we see
    \[
        u^\tp  M' u = (u')^\tp  M' u'= (u')^\tp  M u'> 0,
    \]
    concluding our proof.
\end{proof}

\paragraph{One-Parameter Family of Metrics}
Although the matrix $M$ in \eqref{eq:metric} is positive definite, it will be useful later to have an explicit expression for the eigenvalues of \( M \), for example, in terms of the angle between \( y \) and \( g \). While this is challenging for a general \( M \), we observe that a one-parameter family of valid metrics \( M \) can be written as
\begin{equation}\label{eq:one-parameter-family}
M \ = \ \frac{1}{y^\tp  g }yy^\tp  + \alpha \sum_{i=1}^{D-1} u_i u_i^\tp  \ = \ \frac{1}{y^\tp  g }yy^\tp  + \alpha \bigg(I - \frac{g g^\tp }{g^\tp  g}\bigg)     
\end{equation}
where $\alpha > 0$ can depend on $y$ and $g$, and $u_i^\tp u_j = \delta_{ij}$. These are exactly the matrices $M$ which acts as the scalar $\alpha$ on the orthogonal complement of the span of $g$ and $y$. %
We show in appendix \ref{sec:opt_M} that the full spectrum of $M$ can be derived for this family of metrics, as a function of $\alpha$.
\paragraph{Optimal Metrics}
We further show in appendix \ref{sec:opt_M} that the one-parameter family \eqref{eq:one-parameter-family} contains several globally ``optimal" metrics. In particular,  we prove that among \textit{all possible metrics}, not just within this one-parameter family, the metric $M_{\text{opt}}$ which achieves the smallest condition number is given by setting $\alpha = \frac{y^\tp y}{g^\tp y}$ in \eqref{eq:one-parameter-family}. The spectrum of $M_{\text{opt}}$ can be written in terms of the angle between $y$ and $g$, which we call $\psi \in (-\frac{\pi}{2}, \frac{\pi}{2})$, as follows:

\begin{equation}\label{eq:lambdas}
\begin{aligned}
\lambda_{\max/\min}(M_{\text{opt}}) &= \frac{||y||}{||g||} \left[ \frac{1}{\cos\left(\psi\right)} \pm \lvert \tan\left(\psi\right) \rvert \right] \\
\lambda_{d}(M_{\text{opt}}) &= \frac{||y||}{||g||} \frac{1}{\cos\left(\psi\right)}
\end{aligned}
\end{equation}
where $1<d<D$. See Figure \ref{fig:all_metrics} for a plot of these curves. The condition number $\kappa$ of the optimal metric $M_{\text{opt}}$ has a particularly simple form as a function of $\psi$
\[\kappa(M_{\text{opt}}) =\frac{\lambda_{\max}(M_{\text{opt}})}{\lambda_{\min}(M_{\text{opt}})} = \frac{1+|\sin\left(\psi\right)|}{1-|\sin\left(\psi\right)|}.  \]
Note that $1/\kappa(M_{\text{opt}}) \in (0,1]$, and can be naturally viewed as a measure of similarity between $g$ and $y$.  We also show in appendix \ref{sec:opt_M} that, among all possible metrics, the one with the \textit{minimum} possible $\lambda_{\max}(M)$ is asymptotically approached as $\alpha \rightarrow 0$. It can be shown that this minimum is given by 
\[ \lambda_{\max}(M) > \frac{\|y\|}{\|g\|}\frac{1}{\cos\left(\psi\right)} ,
\]
Similarly, the metric with the \textit{maximum} possible $\lambda_{\min}(M)$ is approached asymptotically when $\alpha \rightarrow \infty$. This maximum is given by 
\[\lambda_{\min}(M) < \frac{\|y\|}{\|g\|} \cos\left(\psi\right),
\]
These results will be particularly useful later on, particularly when analyzing discrete-time learning rules in section \ref{sec:discrete_time}.

\begin{figure}[ht!]
    \includegraphics[width=1.0\textwidth]{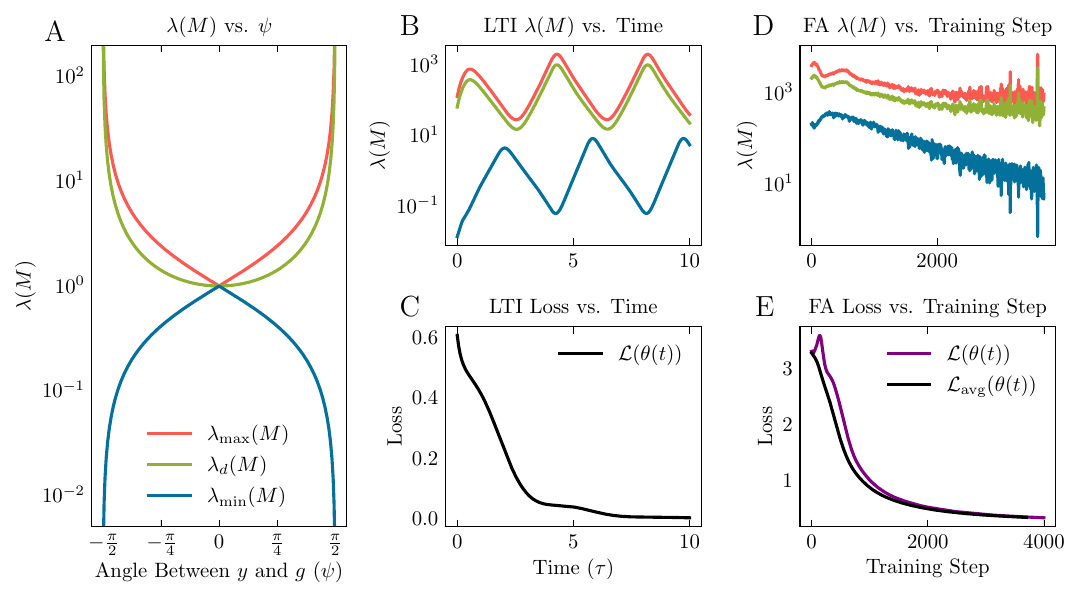}
    \caption{A) Eigenvalues of the optimal metric \( M_{\text{opt}} \) as a function of the angle \( \psi \) between vectors \( y \) and \( g \), with the norm ratio \( \|y\|/\|g\| \) fixed at unity. Refer to Eq. \eqref{eq:lambdas} in the main text. B) Spectrum of \( M_\text{opt} \) for stable linear time-invariant dynamics over time. C) Lyapunov function (loss) corresponding to the dynamics in (B), demonstrating a monotonic decrease. D) Spectrum of \( M_\text{opt} \) for a small multi-layer network trained with a biologically plausible learning rule (feedback alignment) to classify MNIST digits. E) Training loss of feedback alignment as a function of training steps, showing that while the instantaneous loss is not strictly monotonic, the average loss decreases over time.}
    \label{fig:all_metrics}
\end{figure}

\paragraph{Metric Asymptotics}
It is clear from \eqref{eq:metric} that the metric $M$ will ``blow-up'' if the negative gradient $y$ becomes orthogonal to the parameter update $g$. This is expected because, in this case, learning does not occur ($d\mathcal{L}/dt = 0$). Furthermore, in this case we would have that
\[y^\tp g = g^\tp  M g = 0, \]
which contradicts the positive-definiteness of $M$. This can be confirmed by inspecting the eigenvalues of the metric $M$ given in \eqref{eq:lambdas} and Figure \ref{fig:all_metrics}. One sees that as the angle $\psi$ between $y$ and $g$ approaches $\pi/2$ or $-\pi/2$, the smallest eigenvalue of the metric goes to zero, causing $M$ to lose its positive definiteness, while the remaining eigenvalues tend to infinity.

\paragraph{Time-Varying Loss}
So far, we have only considered loss functions $\mathcal{L}(\theta)$ which do not depend explicitly on the time $t$. However, there are many cases of interest where the loss can be thought of as changing in time, for example in online convex optimization \citep{hazan2016introduction}. In this case, we can show that effective learning implies an \textit{extended} parameter vector may be written as natural gradient descent on the time-varying loss $\mathcal{L}(\theta,t)$. We define this new extended vector as $v$, and its time-derivative as $\dot{v}$
\[ v \coloneq \begin{bmatrix}\theta & t \end{bmatrix}^\tp \qquad \implies \qquad  \dt{v} = \begin{bmatrix}\dt{\theta} & 1 \end{bmatrix}^\tp \]
Then, the total derivative of the time-varying loss, as $\theta$ evolves in time is given by
\[\dt{\mathcal{L}} \ = \ \pp{\mathcal{L}}{\theta}^\tp \, \dt{\theta} + \pp{\mathcal{L}}{t}  \ = \  \pp{\mathcal{L}}{v}^\tp \, \dt{v} \ < 0 \]
Thus, we may conclude that updates to the extended variable $v$ perform natural gradient descent on the time-varying loss $\mathcal{L}$
\[\dt{v} = -M^{-1}\pp{\mathcal{L}}{v}  \]
where $M$ is constructed as before, with $y = -\pp{\mathcal{L}}{v}$ and $g = \dot{v}$.

\subsection{Discrete-Time Learning Rules}\label{sec:discrete_time}
Consider a discrete-time learning rule which decreases a loss function $\mathcal{L}$ at every step
\begin{equation}\label{eq:discrete_updates}
\theta_{t+1} = \theta_{t} +  \eta \ g(\theta_t,t) \qquad \text{and} \qquad \mathcal{L}(\theta_{t+1}) - \mathcal{L}(\theta_{t}) < 0.
\end{equation}
In this section we will show that \eqref{eq:discrete_updates} implies that the updates $g$ can always be written in the form of a positive definite matrix multiplied by the \textit{discrete gradient}, which we define below. We will also show that for smooth loss functions $\mathcal{L}$ and sufficiently small $\eta$, it is possible to construct at every time $t$ a symmetric positive definite matrix $M$ (in general different from the $M$ considered above) such that
\[g(\theta_t) = -M^{-1} \ \nabla_{\theta} \mathcal{L}(\theta_t). \]
To prove this, we recall Taylor's Theorem \citep{rudin1964principles,nocedal1999numerical}, and the definition of a Discrete Gradient \citep{gonzalez1996time,mclachlan1999geometric}.
\begin{theorem}[Taylor's Theorem] Suppose that $\mathcal{L} \colon \fR^D \rightarrow \fR$ is a twice continuously differentiable function, and that $p \in \fR^D$. Then there exists some $\lambda \in (0,1)$ such that
\begin{equation}\label{eq: Taylors_Theorem}
\mathcal{L}(x + p) = \mathcal{L}(x) + p^\tp  \nabla \mathcal{L} (x) + \frac{1}{2}p^\tp \,  \nabla^2 \mathcal{L}\left(x + \lambda p\right) \, p.
\end{equation}
\end{theorem}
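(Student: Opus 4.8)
The plan is to reduce the multivariate claim to the one-dimensional Taylor expansion with Lagrange remainder by restricting $\mathcal{L}$ to the segment joining $x$ and $x+p$. Define $\varphi \colon [0,1] \to \fR$ by $\varphi(\tau) \coloneqq \mathcal{L}(x+\tau p)$. Because $\tau \mapsto x + \tau p$ is affine and $\mathcal{L}$ is twice continuously differentiable, $\varphi$ is twice continuously differentiable on an open interval containing $[0,1]$, and the chain rule gives $\varphi'(\tau) = p^\tp \nabla\mathcal{L}(x+\tau p)$ and, upon differentiating the first partials of $\mathcal{L}$ once more, $\varphi''(\tau) = p^\tp \nabla^2\mathcal{L}(x+\tau p)\,p$. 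Also $\varphi(0) = \mathcal{L}(x)$ and $\varphi(1) = \mathcal{L}(x+p)$.

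First I would apply the scalar second-order Taylor theorem with Lagrange remainder to $\varphi$ on $[0,1]$: there is $\lambda \in (0,1)$ such that $\varphi(1) = \varphi(0) + \varphi'(0) + \tfrac12\varphi''(\lambda)$. Substituting the four identities above turns this into precisely \eqref{eq: Taylors_Theorem}, so the only remaining work is to justify the scalar statement itself.

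For that, I would use the standard auxiliary-function argument. Choose the constant $c$ so that $h(\tau) \coloneqq \varphi(1) - \varphi(\tau) - (1-\tau)\varphi'(\tau) - c\,(1-\tau)^2$ satisfies $h(0) = 0$; by construction $h(1) = 0$ as well. Since $h$ is continuous on $[0,1]$ and differentiable on $(0,1)$, Rolle's theorem yields $\lambda \in (0,1)$ with $h'(\lambda) = 0$. A one-line computation gives $h'(\tau) = (1-\tau)\big(2c - \varphi''(\tau)\big)$, so $h'(\lambda) = 0$ forces $c = \tfrac12\varphi''(\lambda)$; reading off $c$ from the equation $h(0) = 0$ then gives $\varphi(1) = \varphi(0) + \varphi'(0) + \tfrac12\varphi''(\lambda)$, as needed.

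I expect no genuine obstacle here, since the argument is elementary; the step deserving the most care will be the regularity accounting in the first paragraph, namely checking that $\varphi$ is really twice differentiable with $\varphi''$ equal to the Hessian quadratic form. This is exactly where the $C^2$ hypothesis on $\mathcal{L}$ is used: it ensures each first partial $\partial_i\mathcal{L}$ is itself differentiable, so the chain rule may be applied a second time. Symmetry of $\nabla^2\mathcal{L}$ is not needed for the argument, although it does hold under the stated hypotheses.
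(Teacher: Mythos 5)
Your proof is correct: the restriction to the segment via $\varphi(\tau)=\mathcal{L}(x+\tau p)$, the chain-rule computation of $\varphi'$ and $\varphi''$, and the Rolle's-theorem argument with the auxiliary function $h$ all check out, and this is the standard derivation of the second-order Lagrange form. The paper itself does not prove this statement --- it recalls Taylor's Theorem as a known result with citations to standard references --- so there is no in-paper argument to compare against; your write-up supplies exactly the omitted standard proof.
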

It is important to note that \eqref{eq: Taylors_Theorem} is an \textit{equality}, and not an approximation (although it can certainly be used to generate an excellent approximation of the difference between $\mathcal{L}(x+p)$ and $\mathcal{L}(x)$ when the norm of $p$ is small and $\mathcal{L}$ is smooth).

\begin{definition}[Discrete Gradient]
Suppose that $\mathcal{L} \colon \fR^D \rightarrow \fR$ is a differentiable function, and that $p \in \fR^D$. Then $\bar{\nabla} \mathcal{L}\colon \fR^D\times \fR^D\to \fR^D$ is a discrete gradient of $\mathcal{L}$ if it is continuous and
\begin{equation}\label{eq: discrete_gradient}
\left\{
\begin{aligned}
p^\tp  \bar{\nabla} \mathcal{L}(x,x + p) &= \mathcal{L}(x + p) - \mathcal{L}(x) \\
\bar{\nabla} \mathcal{L}(x, x) &= \nabla \mathcal{L}(x).
\end{aligned}
\right.
\end{equation}
\end{definition}
\paragraph{Discrete-Time Metric}
As in the analysis of continuous-time learning rules above, we define the negative \textit{discrete} gradient as
\begin{equation}\label{eq:bary-definition}
\bar{y} \coloneq - \bar{\nabla} \mathcal{L}(\theta_t,\theta_{t+1}).
\end{equation}
Note that \eqref{eq:discrete_updates} and  \eqref{eq: discrete_gradient} together imply that updates of the parameter vector $\theta$ will correlate with $\bar{y}$
\[\eta \, g^\tp \bar{y} \ = \ - \left[\mathcal{L}(\theta_{t+1}) - \mathcal{L}(\theta_{t})\right] \  > \ 0. \]
Using this observation, we define the discrete analog of the metric \eqref{eq:metric} as
\begin{equation}\label{eq:Mbar-definition}
\bar{M} \coloneq  \frac{\bar{y}\bar{y}^\tp }{\bar{y}^\tp g} + \sum_{i = 1}^{D-1}u_i u_i^\tp
\end{equation}
where as before, the vectors \( u_i \) are chosen to span the subspace orthogonal to \( g \), denoted by \( g^\perp \coloneq\{v \in \mathbb{R}^n : v^\tp g = 0\} \). We can see from this definition of $\bar{M}$ that 
\[\bar{M} g = \bar{y}  \qquad \text{and} \qquad \bar{M} = \bar{M}^\tp  \succ 0. \]
This implies, via \eqref{eq:discrete_updates}, that the parameter updates can be written in the form of a positive definite matrix multiplied by the discrete gradient, as claimed
\begin{equation}\label{eq:discrete_M_equation}
\theta_{t+1} = \theta_{t} - \eta\bar{M}^{-1} \, \bar{\nabla} \mathcal{L}\left(\theta_t, \, \theta_{t+1}\right).    
\end{equation}
Although \eqref{eq:discrete_M_equation} bears a resemblance to the natural gradient descent rule, they are not identical. This is because the discrete gradient does not always correspond to the gradient of a specific loss function. In the following section, we explore the conditions under which \eqref{eq:discrete_M_equation} can be considered a ``true'' natural gradient descent. In order to do this, we introduce a new discrete gradient, derived from the Hessian of the loss function.

\paragraph{Small Learning Rate Regime}
Motivated by Taylor's Theorem, we now introduce the following \textit{particular} discrete gradient
\begin{equation}\label{eq: discrete_gradient_hessian}
\bar{\nabla} \mathcal{L}(x,x + p) \coloneq \nabla \mathcal{L} (x) + \frac{1}{2}  \nabla^2 \mathcal{L}(x + \lambda p) \, p
\end{equation}
where $\lambda \in (0,1)$ is derived from \eqref{eq: Taylors_Theorem}. It can be easily verified that the discrete gradient conditions \eqref{eq: discrete_gradient} hold. Taking $p = \eta \, g(\theta_t)$, we see that for $\bar y$ as in \eqref{eq:bary-definition},
\[\bar{y} = -\nabla \mathcal{L} (\theta_t) - \frac{\eta}{2}  \nabla^2 \mathcal{L}(\theta_t + \lambda \eta  g) \, g =-\nabla \mathcal{L} (\theta_t) - \eta Hg \]
where 
\[ H \coloneq \frac{1}{2}\nabla^2 \mathcal{L}(\theta_t + \lambda \eta g). \]
Note that for this particular choice of discrete gradient, we also have that 
\[\bar{y} \rightarrow y \quad \text{as} \quad \eta \rightarrow 0. \]
Since \eqref{eq:discrete_M_equation} can be re-written as $\theta_{t+1}-\theta_t=\eta\overline M^{-1}\overline y$, from \eqref{eq: discrete_gradient_hessian} we obtain
\[ \bar{M}\bigg(\frac{\theta_{t+1} - \theta_{t}}\eta\bigg) = \bar{y} = -\nabla \mathcal{L} (\theta_t) - H\big(\theta_{t+1} - \theta_{t}\big). \]
Adding the Hessian term to both sides, we have
\begin{equation}\label{eq:almost_natural_gradient}
\big[\bar{M} + \eta H\big]\bigg(\frac{\theta_{t+1}-\theta_{t}}\eta\bigg) = -\nabla \mathcal{L}(\theta_t). 
\end{equation}
Equation \eqref{eq:almost_natural_gradient} is almost in the desired natural gradient form. In order to put it in exactly natural gradient form, we would like the matrix $\bar{M} + H$ to be positive definite. We will now show that this can be done by choosing $\eta$ sufficiently small. In the case where the loss function $\mathcal{L}$ is convex, $\bar{M} + H$ is always positive definite. We therefore only deal with the case when the loss $\mathcal{L}$ is non-convex, so that $H$ has a negative minimum eigenvalue. That is, we assume
\[ \exists \ h > 0 \quad \text{such that} \quad \lambda_{\min}(H) = -h.\]
Using the results of section \ref{sec: cont_time} on picking a metric with an easily calculable minimum eigenvalue, and the fact \citep{horn2012matrix} that
\[\lambda_{\min}(\bar{M} +\eta H) \geq \lambda_{\min}(\bar{M}) + \eta \lambda_{\min}(H) \]
we can ensure that $\lambda_{\min}(\bar{M} +\eta H) > 0$ by choosing $\eta$ to be sufficiently small:
\[ \eta < \frac{1}{h} \, \frac{\|\bar{y}\|}{\|g\|}\cos(\bar{\psi})\]
where $\bar{\psi}$ is the angle between the discrete gradient $\bar{y}$ and the update vector $g$, and is always between $-\pi/2$ and $\pi/2$. If $\eta$ satisfies this inequality, then we can invert the $\bar{M} +\eta H$ term, yielding
\begin{equation}\label{eq: discrete_time_natural_gradient}
 \frac{\theta_{t+1}-\theta_{t} }\eta= -\big[\bar{M} + \eta H\big]^{-1}\nabla \mathcal{L}(\theta_t)     
\end{equation}
which is precisely a discrete-time natural gradient update rule.

\paragraph{Limit as Learning Rate Goes to Zero ($\eta \rightarrow 0$)}
Using the fact that
\[\bar{M} \rightarrow M \quad \text{and} \quad \eta H \rightarrow 0 \quad \text{as} \quad \eta \rightarrow 0, \]
we obtain that the limit of \eqref{eq: discrete_time_natural_gradient} as $\eta\to0$ recovers the natural gradient descent \eqref{eq:natural_gradient_descent}:
\[\dot{\theta} = -M^{-1} \ \nabla \mathcal{L}(\theta).\]

\paragraph{Stochastic Learning Rules}
When the discrete learning rule is stochastic, there is a probability distribution over $\theta_{t+1}$ given a known $\theta_t$. In this case, the average update will be given as
$$\eta \langle g(\theta_t) \rangle = \langle \theta_{t+1} - \theta_t\rangle = \langle \theta_{t+1} \rangle - \theta_t.$$
Effective learning on average for a given loss $\mathcal{L}$, up to the generality of integrating this in time, can be defined as any learning rule that yields
$$\langle \mathcal{L}(\theta_{t+1} ) - \mathcal{L}(\theta_t) \rangle < 0.$$
Similar to the deterministic case, we can define
$$\bar{M} = \frac{\langle \bar y \rangle \langle \bar y \rangle^\tp}{\langle g \rangle^\tp  \langle \bar y \rangle} + \sum_{i=1}^{D-1} u_i u_i^\tp$$
where $\bar y$ is the negative discrete gradient defined previously and vectors $u_i$ span the subspace orthogonal to $\langle g \rangle$. 
This yields
\[
\bar{M} \langle g \rangle=   -\nabla L (\theta_t) - \eta \langle Hg \rangle.
\]
In the case where $\langle Hg\rangle^\tp \langle g \rangle = 0$, $\bar{M}$ already works as a metric. Otherwise, we want the matrix
\begin{equation}\label{stochastic_metric}
    M \coloneq \bar{M} + \eta\frac{\langle H g \rangle \langle H g \rangle^\tp }{\langle H g \rangle^\tp \langle g \rangle}
\end{equation}
to be positive definite to allow the average learning rule to be expressed as natural gradient descent,
\begin{equation}\label{eq:stochastic_ngd}
    \langle g \rangle = M^{-1} \nabla L(\theta_t).
\end{equation}
Similar to the deterministic case, this will always hold for small enough $\eta$. This is because the second term in \eqref{stochastic_metric} can be made arbitrarily small as $\eta \rightarrow 0$.%

\section{Applications}
\subsection{Numerical Experiments}
We provide two numerical experiments supporting the theory developed above. In the first, we show that a stable linear time-invariant (LTI) dynamical system, which in general cannot be written as the gradient of a scalar function, can be written in the natural gradient form. In the second, we show that a popular biologically-plausible alternative to propagation, Feedback Alignment, can also be written as a natural gradient descent. 
\paragraph{Linear Time-Invariant Dynamics}
We consider the stable LTI system
\begin{equation}\label{eq:LTI}
\dt{\theta} = g(\theta,t) = A\theta
\end{equation}
where $A$ is an asymmetric matrix with eigenvalues strictly in the left-hand side of the complex plane. Because $A$ is asymmetric, the dynamics \eqref{eq:LTI} cannot be written as the gradient of a scalar function (because this would imply the Hessian, $A$, is symmetric). Of course, it is well-know that the trajectories of $\eqref{eq:LTI}$ \textit{do} decrease the Lyapunov function
\[\mathcal{L}(\theta) = \theta^\tp P \theta \qquad \text{where} \qquad PA + A^\tp P = -Q \]
with $Q = Q^\tp, \,  P = P^\tp \succ 0$ \citep{brockett2015finite}. In simulations, we set $Q = I$ and solved for $P$ by using the SciPy \citep{2020SciPy-NMeth} function \texttt{scipy.linalg.solve\_continuous\_lyapunov}. In this case, we have that 
\[y = - \nabla_\theta \mathcal{L} = -2 P\theta \]
and the metric $M$ can be calculated according to our results, putting the dynamics \eqref{eq:LTI} in the natural gradient form
\[\dt{\theta} = -M^{-1}(\theta) \, \nabla_\theta \mathcal{L} . \]
Figure \ref{fig:all_metrics} shows the results.

\paragraph{Biologically Plausible Learning (Feedback Alignment)}
Feedback alignment (FA) is a biologically plausible alternative to backpropagation (BP) \citep{lillicrap2016random} with strong performance on benchmarks and favorable scaling for large networks \citep{nokland2016direct,launay2020direct}. FA uses a random, fixed backward connectivity structure instead of BP's symmetric weights. We train a simple linear network on MNIST, and FA, as expected, improves performance. We also derive the metric \(M\) that relates the updates of BP to FA. The results can be found in Figure \ref{fig:all_metrics}.

\section{Discussion}
\paragraph{Contributions and Related Work}
It is known that if a continuous-time dynamical system has a strict Lyapunov function, the system can be described by a symmetric positive definite matrix multiplied by the negative gradient of the Lyapunov function \citep{mclachlan1999geometric,barta2012every}. In our case, the loss function acts as the Lyapunov function for learning dynamics. 

Our work extends the results of \citet{mclachlan1999geometric} to both continuous-time and discrete-time (both deterministic and stochastic) systems, even when the loss does not decrease monotonically and is potentially time-varying. We prove that the class of metrics considered in \citet{mclachlan1999geometric} and \citet{barta2012every} is canonical--it is the \textit{only} class of valid metrics. We derive a one-parameter family of metrics for which the spectrum can be calculated exactly. We show that the ``optimal" metric (in terms of having the smallest condition number) over all possible metrics exists within this one-parameter family. This proves to be especially useful in the analysis of discrete-time learning rules. 

Conceptually, our findings help clarify discussions about learning rules by showing that effective learning rules, including biological ones, belong to the class of natural gradient algorithms. This applies in both continuous and discrete time, supporting the idea that the gradient is fundamental to all learning processes.

\paragraph{Limitations \& Future Work} 
We conjecture that any sequence of parameter updates leading to overall improvement in a loss function (even if not monotonically) can be reformulated as natural gradient descent for some appropriately chosen loss function and metric. Future work will focus on broadening the scope of the results presented here, aiming to prove this conjecture in full generality.

\section*{Acknowledgements}
LK acknowledges  Professor Jean-Jacques Slotine (MIT) for helpful conversations. 

\appendix 

\makeatletter
\renewcommand{\fnum@figure}{\thefigure}
\makeatother

\section{Optimal Metric}\label{sec:opt_M}
\paragraph{Eigenvalues of One-Parameter Family}
For convenience, we begin by setting $\alpha = \gamma \frac{y^\tp y}{y^\tp g}$, for some $\gamma > 0$. 
\begin{lemma}\label{lemma:M-eigenvalues}
    The eigenvalues of 
    \begin{equation}\label{eq:M-defn}
    M=\frac{yy^\tp}{y^\tp g} +\gamma \frac{y^\tp y}{y^\tp g}\bigg(I-\frac{gg^\tp }{g^\tp g}\bigg)
    \end{equation}
    are
    \[
\lambda_{\max/\min}(M)=\frac{\|y\|}{2\|g\|\cos(\psi)}\bigg((1+\gamma)\pm\sqrt{(1+\gamma)^2-4\gamma\cos^2\left(\psi\right)}\bigg),
\]
with multiplicity one each, and
\[
\frac{\|y\|}{\|g\|\cos(\psi)}\gamma
\]
with multiplicity $D-2$.
\end{lemma}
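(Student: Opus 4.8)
The plan is to block-diagonalize $M$ by splitting $\fR^D$ into the subspace $V \coloneqq \operatorname{span}\{g,y\}$ and its orthogonal complement, reducing the problem to a single $2\times 2$ eigenvalue computation. As already observed in the main text, $M$ acts on $V^\perp$ as the scalar $\alpha = \gamma\,\tfrac{y^\tp y}{y^\tp g}$: for $v\perp\operatorname{span}\{g,y\}$ we have $y^\tp v = 0$ and $g^\tp v = 0$, so $Mv = \gamma\tfrac{y^\tp y}{y^\tp g}v$. Since $y^\tp g = \|y\|\,\|g\|\cos\psi$, this already yields the eigenvalue $\tfrac{\|y\|}{\|g\|\cos\psi}\gamma$ with multiplicity $\dim V^\perp = D-2$ (assuming $g,y$ linearly independent; the degenerate case is handled at the end). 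Because $M$ is symmetric and preserves $V^\perp$, it also preserves $V$, so the two remaining eigenvalues are exactly those of the self-adjoint operator $M|_V$.

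To compute $M|_V$ I would choose the orthonormal basis of $V$ adapted to the geometry: take $e_1 \coloneqq g/\|g\|$ and let $e_2$ be the unit vector in $V$ with $e_2 \perp e_1$, oriented so that $y = \|y\|\bigl(\cos\psi\, e_1 + \sin\psi\, e_2\bigr)$ — this is precisely the defining relation for the angle $\psi$, and effectiveness ($y^\tp g > 0$) guarantees $\cos\psi > 0$, so the basis and the expansion are legitimate. Using $Mg = y$ together with $g^\tp e_2 = 0$ and $y^\tp e_2 = \|y\|\sin\psi$, a short computation gives the matrix of $M|_V$ in this basis as
\[ M_V \;=\; \frac{\|y\|}{\|g\|}\begin{pmatrix}\cos\psi & \sin\psi \\ \sin\psi & (\sin^2\psi+\gamma)/\cos\psi\end{pmatrix}. \]
From here one reads off $\Tr(M_V) = \tfrac{\|y\|}{\|g\|}\cdot\tfrac{1+\gamma}{\cos\psi}$ and $\det(M_V) = \tfrac{\|y\|^2}{\|g\|^2}\gamma$, and solving the characteristic quadratic $\lambda^2 - \Tr(M_V)\,\lambda + \det(M_V) = 0$ produces exactly the stated $\lambda_{\max/\min}(M)$. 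The discriminant simplifies to $(1+\gamma)^2 - 4\gamma\cos^2\psi = (1-\gamma)^2 + 4\gamma\sin^2\psi \ge 0$, confirming both roots are real and (since $\Tr(M_V)>0$, $\det(M_V)>0$) positive; a short comparison shows $\lambda_- \le \gamma\tfrac{\|y\|}{\|g\|\cos\psi} \le \lambda_+$, which justifies labeling the in-plane eigenvalues $\lambda_{\min}$ and $\lambda_{\max}$.

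\textbf{Main difficulty.} Frankly there is no deep obstacle: the entire argument is elementary linear algebra once the invariant-subspace splitting is set up, and routing through the trace and determinant of $M_V$ avoids computing eigenvectors altogether. The only points that require care are bookkeeping ones — keeping the sign convention for $\psi \in (-\tfrac{\pi}{2},\tfrac{\pi}{2})$ consistent (harmless, since only $\cos\psi>0$ and $\sin^2\psi$ enter the final formulas), and the degenerate case $y \parallel g$ (equivalently $\psi = 0$), where $V$ is only one-dimensional. In that case one checks directly that $M$ has eigenvalue $\|y\|/\|g\|$ on $\operatorname{span}\{g\}$ and $\gamma\|y\|/\|g\|$ on $g^\perp$, which is precisely the limit of the stated formulas as $\cos\psi\to1$, $\sin\psi\to0$ (the simple eigenvalue $\lambda_{\min}$ merges with the multiplicity-$(D-2)$ eigenvalue to give total multiplicity $D-1$); alternatively one invokes continuity of the spectrum in $(g,y)$.
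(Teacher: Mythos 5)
Your proposal is correct and follows essentially the same route as the paper: both reduce to the two-dimensional invariant subspace spanned by $g$ and $y$ (on whose orthogonal complement $M$ acts as the scalar $\gamma\,\|y\|/(\|g\|\cos\psi)$) and then solve a quadratic for the two remaining eigenvalues — you via the trace and determinant of the $2\times2$ block in an orthonormal basis, the paper via the eigenvector ansatz $\hat y+\zeta\hat g$. Your additional checks (that the two in-plane eigenvalues bracket the multiplicity-$(D-2)$ eigenvalue, justifying the $\max/\min$ labels, and the degenerate case $y\parallel g$) are correct and in fact slightly more complete than the paper's own write-up.
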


\begin{proof}
Since
    \[
    M=\frac{\|y\|}{\|g\|\cos\left(\psi\right)}\bigg(\hat y\hat y^\tp +\gamma(I-\hat g\hat g^\tp )\bigg),
    \]
    it suffices to compute the eigenvalues of
    \[
    A_0:=\hat y\hat y^\tp -\gamma\hat g\hat g^\tp .
    \]
    $A_0$ acts on a vector $v=\hat y+\zeta \hat g$ as
    \[
    A_0v=\big(\hat y-\cos(\psi)\gamma \hat g\big)+\zeta\big(\cos(\psi)\hat y-\gamma \hat g\big)=\big(1+\zeta\cos\left(\psi\right)\big)\hat y-\big(\cos(\psi)\gamma +\zeta\gamma\big)\hat g,
    \]
which is a multiple of $v$ exactly when
\[
\zeta\big(1+\zeta\cos\left(\psi\right)\big)=-\big(\cos(\psi)\gamma +\zeta\gamma\big).
\]
This is equivalent to
\(
\cos(\psi)\zeta^2+(1+\gamma)\zeta+\gamma\cos\left(\psi\right)=0,
\)
i.e.,
\[
\zeta=\frac1{2\cos(\psi)}\bigg(-(1+\gamma)\pm\sqrt{(1+\gamma)^2-4\gamma\cos^2\left(\psi\right)}\bigg).
\]
Thus the corresponding eigenvalues of $A_0$ are
\[
\lambda_{\max/\min}(A_0)=1+\zeta\cos\left(\psi\right)=\frac1{2}\bigg(1-\gamma\pm\sqrt{(1+\gamma)^2-4\gamma\cos^2\left(\psi\right)}\bigg).
\]
They correspond to the eigenvalues
\[
\lambda_{\max/\min}(M)=\frac{\|y\|}{2\|g\|\cos\left(\psi\right)}\bigg(1+\gamma\pm\sqrt{(1+\gamma)^2-4\gamma\cos^2\left(\psi\right)}\bigg).\qedhere
\]
\end{proof}
\paragraph{Optimal Condition Number for One-Parameter Family}
The condition number for $M$ as in \eqref{eq:M-defn} is
\[
    \kappa(M)=\frac{1+\gamma+\sqrt{(1+\gamma)^2-4\gamma\cos^2\left(\psi\right)}}{1+\gamma-\sqrt{(1+\gamma)^2-4\gamma\cos^2\left(\psi\right)}}=\frac{\big(1+\gamma+\sqrt{(1+\gamma)^2-4\gamma\cos^2\left(\psi\right)}\big)^2}{4\gamma\cos^2\left(\psi\right)}.
\]
so
\[
\sqrt{\kappa(M)}=\frac{1+\gamma+\sqrt{(1+\gamma)^2-4\gamma\cos^2\left(\psi\right)}}{2\sqrt\gamma\cos\left(\psi\right)}=\frac{\gamma^{1/2}+\gamma^{-1/2}+\sqrt{(\gamma^{1/2}+\gamma^{-1/2})^2-4\cos^2\left(\psi\right)}}{2\cos\left(\psi\right)}.
\]
Now let $\cos(\psi)z=\gamma^{1/2}+\gamma^{-1/2}$, which is some variable $z\ge2/\cos(\psi)$. We are trying to minimize
\[
\sqrt{\kappa(M)}=\frac{z+\sqrt{z^2-4}}{2}.
\]
This is a monotonically increasing function of $z$ so is minimized at $z=2/\cos(\psi)$. This corresponds to $\gamma=1$.

\paragraph{Optimal Condition Number for all Metrics}
Let us prove a lemma:
\begin{lemma}\label{lemma1}
    Suppose $v_0$, $v_1$ are vectors. The following are equivalent:
    \begin{enumerate}
        \item for any $B$ a symmetric matrix, $v_0^\tp Bv_0=v_1^\tp Bv_1$; and
    \item $v_0=\pm v_1$.
    \end{enumerate}
\end{lemma}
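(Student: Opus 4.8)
The plan is to prove Lemma~\ref{lemma1} by establishing the two directions separately, with the nontrivial content lying entirely in the implication $(1)\Rightarrow(2)$. The direction $(2)\Rightarrow(1)$ is immediate: if $v_0 = \pm v_1$ then $v_0^\tp B v_0 = (\pm v_1)^\tp B (\pm v_1) = v_1^\tp B v_1$ for every $B$, since the sign squares away. So the work is to show that if the quadratic forms agree on all symmetric $B$, the vectors must coincide up to sign.

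For $(1)\Rightarrow(2)$, the key idea is to feed well-chosen symmetric matrices $B$ into the hypothesis and read off constraints on $v_0,v_1$. First I would take $B = I$, which gives $\|v_0\|^2 = \|v_1\|^2$; call this common value $r^2$. If $r = 0$ both vectors are zero and we are done, so assume $r > 0$. Next, for an arbitrary fixed vector $w$, apply the hypothesis to the rank-one symmetric matrix $B = w w^\tp$: this yields $(v_0^\tp w)^2 = (v_1^\tp w)^2$, i.e.\ $|v_0^\tp w| = |v_1^\tp w|$ for \emph{every} $w \in \fR^n$. (One can also polarize using $B = ww^\tp + w'w'^\tp$ to get the bilinear statement, but the rank-one family already suffices.) The remaining step is a short linear-algebra argument: I claim $|v_0^\tp w| = |v_1^\tp w|$ for all $w$ forces $v_0 = \pm v_1$. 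To see this, pick $w = v_0$, giving $\|v_0\|^2 = |v_1^\tp v_0|$, so $|v_1^\tp v_0| = r^2 = \|v_0\|\,\|v_1\|$; equality in Cauchy--Schwarz then means $v_1 = c\,v_0$ for some scalar $c$, and comparing norms gives $|c| = 1$, hence $v_1 = \pm v_0$.

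The main obstacle — really the only place one has to be slightly careful — is handling the degenerate case $v_0 = 0$ (equivalently $v_1 = 0$), where the Cauchy--Schwarz equality argument is vacuous; this is dispatched by first extracting $\|v_0\| = \|v_1\|$ from $B = I$ so that one vector vanishes iff the other does. A secondary point worth stating cleanly is \emph{why} testing against rank-one matrices $ww^\tp$ is legitimate and sufficient: the symmetric matrices $\{ww^\tp : w \in \fR^n\}$ span the space of all symmetric matrices, so in principle the hypothesis for all symmetric $B$ is equivalent to the hypothesis for this family, but we do not even need that — we only use that each $ww^\tp$ is itself symmetric, which is obvious. I expect the whole proof to occupy only a few lines once written out, matching the paper's stated emphasis that the arguments rely on elementary linear algebra.

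\begin{proof}
$(2)\Rightarrow(1)$: If $v_0 = \pm v_1$, then for any symmetric $B$ we have $v_0^\tp B v_0 = (\pm v_1)^\tp B(\pm v_1) = v_1^\tp B v_1$.

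$(1)\Rightarrow(2)$: Taking $B = I$ gives $\|v_0\|^2 = \|v_1\|^2 =: r^2$. If $r = 0$, both vectors are zero and $v_0 = v_1$. Otherwise, for any $w$ the matrix $B = ww^\tp$ is symmetric, so the hypothesis gives $(v_0^\tp w)^2 = (v_1^\tp w)^2$, hence $|v_0^\tp w| = |v_1^\tp w|$ for all $w$. Choosing $w = v_0$ yields $|v_1^\tp v_0| = \|v_0\|^2 = r^2 = \|v_0\|\,\|v_1\|$, so the Cauchy--Schwarz inequality holds with equality; therefore $v_1 = c\, v_0$ for some scalar $c$. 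Comparing norms forces $|c| = 1$, so $v_0 = \pm v_1$.
\end{proof}
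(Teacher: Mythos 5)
Your proof is correct, and it takes a somewhat different route from the paper's. The paper works in coordinates: writing $v_0=(x_1,\dots,x_D)^\tp$ and $v_1=(y_1,\dots,y_D)^\tp$, it tests the hypothesis against the elementary symmetric matrices to conclude $x_ix_j=y_iy_j$ for all $i,j$, then argues that the resulting signs $x_i=\pm y_i$ must be globally consistent. You instead test against $B=I$ and the rank-one matrices $B=ww^\tp$, reduce to the statement $|v_0^\tp w|=|v_1^\tp w|$ for all $w$, and close with the equality case of Cauchy--Schwarz at $w=v_0$. The two arguments are comparably elementary, but yours is basis-free and replaces the paper's slightly terse sign-consistency step (``these signs must all be the same using all the other equations'') with a one-line appeal to Cauchy--Schwarz; you also handle the degenerate case $v_0=v_1=0$ explicitly, which the paper leaves implicit. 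The paper's coordinate computation has the mild advantage of making transparent exactly which test matrices are needed (diagonal and single off-diagonal entries), which is the same information your remark about rank-one matrices spanning the symmetric matrices conveys. Either proof serves equally well as input to Lemma~\ref{lemma2}.
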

\begin{proof}
    One direction is obvious. For the non-obvious direction, let $B=(b_{ij})_{i,j=1}^D$ where $b_{ij}=b_{ji}$. Let $v_0=(x_1,\dots,x_D)^\tp $ and $v_1=(y_1,\dots,y_D)^\tp $. Then we have the equation
    \[
    \sum_{i,j=1}^Db_{ij}x_ix_j=\sum_{i,j=1}^Db_{ij}y_iy_j.
    \]
    Thus we conclude that $x_ix_j=y_iy_j$ for any two indices $i$ and $j$. When $i=j$ this implies $x_i=\pm y_i$, but these signs must all be the same using all the other equations.
\end{proof}

As a consequence, we can prove the following variant:

\begin{lemma}\label{lemma2}
    Suppose $v_0$ and $v_1$ are vectors, and $g$ is a non-zero vector. The following are equivalent:
    \begin{enumerate}
        \item for any symmetric matrix $B$ such that $Bg=0$, the equality $v_0^\tp Bv_0=v_1^\tp Bv_1$ holds; and
    \item there exists an $\alpha\in \fR$ such that $v_0=\pm v_1+\alpha g$.
    \end{enumerate}
\end{lemma}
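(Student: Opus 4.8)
The plan is to reduce Lemma~\ref{lemma2} to Lemma~\ref{lemma1} by passing to the quotient space $\fR^D/\langle g\rangle$, or equivalently by working with a complementary subspace to $g$. First I would observe that the direction $(2)\Rightarrow(1)$ is immediate: if $v_0 = \pm v_1 + \alpha g$ and $B$ is symmetric with $Bg = 0$, then $v_0^\tp B v_0 = (\pm v_1 + \alpha g)^\tp B(\pm v_1 + \alpha g) = v_1^\tp B v_1 \pm 2\alpha\, v_1^\tp B g + \alpha^2 g^\tp B g = v_1^\tp B v_1$, since $Bg = 0$. So the content is in $(1)\Rightarrow(2)$.

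For the hard direction, the key idea is that symmetric matrices $B$ with $Bg = 0$ are exactly the symmetric bilinear forms that factor through the projection onto $g^\perp$. Concretely, fix the orthogonal projection $\Pi = I - gg^\tp/(g^\tp g)$ onto $g^\perp$. Then every symmetric $B$ with $Bg = 0$ can be written as $B = \Pi^\tp C \Pi$ for some symmetric $C$ acting on $g^\perp$ (indeed, take $C = B$ restricted to $g^\perp$), and conversely every such $\Pi^\tp C\Pi$ kills $g$. Therefore the hypothesis $(1)$ says precisely that $(\Pi v_0)^\tp C (\Pi v_0) = (\Pi v_1)^\tp C (\Pi v_1)$ for all symmetric forms $C$ on the $(D-1)$-dimensional space $g^\perp$. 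Applying Lemma~\ref{lemma1} inside $g^\perp$ gives $\Pi v_0 = \pm \Pi v_1$, i.e. $v_0 - v_1$ or $v_0 + v_1$ lies in $\ker \Pi = \langle g\rangle$, which is exactly the assertion $v_0 = \pm v_1 + \alpha g$ for some $\alpha \in \fR$.

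The main obstacle, and the only step requiring care, is the claim that the maps $B \mapsto \Pi^\tp B\Pi$ surject onto $\{B = B^\tp : Bg = 0\}$ and that Lemma~\ref{lemma1} genuinely applies on $g^\perp$; this is where one must be careful that there is no hidden constraint linking the two potential signs. As in the proof of Lemma~\ref{lemma1}, one extracts, for a basis of $g^\perp$, the pairwise product equations $a_i a_j = b_i b_j$ where $a, b$ are the coordinate vectors of $\Pi v_0, \Pi v_1$; the diagonal entries force $a_i = \pm b_i$ componentwise, and the off-diagonal entries force a single global choice of sign whenever $\Pi v_1 \neq 0$ (and when $\Pi v_1 = 0$ the conclusion $\Pi v_0 = 0 = \pm \Pi v_1$ is trivial). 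Lifting back through $\Pi$ introduces the free parameter $\alpha$, completing the argument. Alternatively, one can avoid choosing a basis by invoking Lemma~\ref{lemma1} as a black box applied to the restricted ambient space $g^\perp \cong \fR^{D-1}$, which is the cleanest route.
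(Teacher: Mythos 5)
Your proposal is correct and follows essentially the same route as the paper: both reduce to Lemma~\ref{lemma1} by applying the orthogonal projection $\Pi = I - gg^\tp/(g^\tp g)$ (the paper writes it as $v_i' = v_i - \frac{g^\tp v_i}{g^\tp g}g$) and invoking Lemma~\ref{lemma1} on the $(D-1)$-dimensional space $g^\perp$. Your added care about the surjectivity of $B \mapsto \Pi^\tp C \Pi$ onto the symmetric forms killing $g$ is a detail the paper leaves implicit, but it is the same argument.
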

\begin{proof}
    Again, one direction is obvious. For the non-obvious direction, we see consider the projection of $v_0$ and $v_1$ to $g^\perp$ along $g$:
    \[
        v_0'=v_0-\frac{g^\tp v_0}{g^\tp  g}g,\hspace{0.5cm}
        v_1'=v_1-\frac{g^\tp v_1}{g^\tp  g}g.
    \]
    Then we see that
    \(
    (v_0')^\tp  Bv_0'=(v_1')^\tp  Bv_1'
    \)
    for all symmetric matrices $B$ on $g^\perp$. Now using Lemma~\ref{lemma1} we see that $v_0'=\pm v_1'$.
\end{proof}

\begin{prop}\label{prop:condition-number}
    Let $M$ be a positive definite matrix such that $Mg=y$ where $y^\tp g>0$. Let $\psi$ be the angle between $g$ and $y$. 
    Then the minimum value for $\kappa(M)$, achieved by \eqref{eq:M-defn} when $\gamma=1$, is
    \[
    \frac{1+|\sin\left(\psi\right)|}{1-|\sin\left(\psi\right)|}.
    \]
\end{prop}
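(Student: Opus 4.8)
The statement really has two halves: that the value $\tfrac{1+|\sin\psi|}{1-|\sin\psi|}$ is \emph{attained} — which was already done in the paragraph ``Optimal Condition Number for One-Parameter Family'' by setting $\gamma=1$ in Lemma~\ref{lemma:M-eigenvalues} — and that it is a \emph{lower bound} for $\kappa(M)$ over all positive definite $M$ with $Mg=y$. I would spend the whole proof on the lower bound, since the attainment part is finished.

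The plan is to combine the spectral sandwich $\lambda_{\min}(M)\,I\preceq M\preceq\lambda_{\max}(M)\,I$ with the single algebraic constraint $Mg=y$ to extract a sharp scalar inequality. Write $\lambda:=\lambda_{\min}(M)$ and $\Lambda:=\lambda_{\max}(M)$. Since $M$ is symmetric with spectrum in $[\lambda,\Lambda]$, every eigenvalue $\mu$ obeys $(\mu-\Lambda)(\mu-\lambda)\le 0$, so by the spectral theorem, as matrices,
\[ M^2 \ \preceq\ (\Lambda+\lambda)\,M - \Lambda\lambda\,I. \]
Evaluating this quadratic form at $g$ and using $Mg=y$ — so that $g^\tp M^2 g=\|Mg\|^2=\|y\|^2$ and $g^\tp M g=g^\tp y$ — I get
\[ \|y\|^2 + \Lambda\lambda\,\|g\|^2 \ \le\ (\Lambda+\lambda)\,g^\tp y. \]
Dividing by $\|g\|^2$, setting $c:=\|y\|/\|g\|$ and using $g^\tp y=c\|g\|^2\cos\psi$, this rearranges (completing the factorization $\Lambda\lambda-(\Lambda+\lambda)c\cos\psi+c^2\cos^2\psi$) to
\[ (\Lambda - c\cos\psi)(c\cos\psi - \lambda) \ \ge\ c^2\sin^2\psi. \]

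The remainder is a one-variable optimization. If $\psi=0$ the claimed bound is $1$ and there is nothing to prove, so assume $\sin\psi\ne 0$; then the left side is strictly positive, which forces $\lambda<c\cos\psi<\Lambda$. I would substitute $\Lambda=c\cos\psi\,(1+s)$ and $\lambda=c\cos\psi\,(1-r)$ with $s>0$, $0<r<1$, turning the inequality into $sr\ge\tan^2\psi$ while $\kappa(M)=\Lambda/\lambda=\tfrac{1+s}{1-r}$. Minimizing $\tfrac{1+s}{1-r}$ subject to $sr\ge\tan^2\psi$ (the infimum is attained on $sr=\tan^2\psi$ and is an interior minimum), a short computation gives $r^\star=\tfrac{|\sin\psi|}{1+|\sin\psi|}$, $s^\star=\tfrac{|\sin\psi|}{1-|\sin\psi|}$, hence $\kappa(M)\ge\tfrac{1+|\sin\psi|}{1-|\sin\psi|}$; together with the attainment statement this proves the proposition.

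The hard part is really the one conceptual move: recognizing that $g^\tp M^2 g=\|y\|^2$ is precisely the extra computable quantity that makes the estimate tight — the naive Rayleigh bounds from $g^\tp M g=g^\tp y$ and $y^\tp M^{-1}y=g^\tp y$ alone only yield $\kappa(M)\ge\sec^2\psi$, which is strictly weaker. An alternative I would keep in reserve is to compress $M$ to the two-dimensional subspace $\mathrm{span}(g,y)$: by the Poincaré separation theorem $\kappa(M)\ge\kappa(N)$ for the $2\times 2$ compression $N$, and once $Ng$ is fixed $N$ has just one free parameter, so $\kappa(N)$ can be minimized directly to the same value; Lemma~\ref{lemma2} can then be invoked, if one wishes, to pin down exactly which $M$ attain equality. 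I would present the operator-inequality argument as the main proof, since it is shortest and uses nothing beyond the spectral theorem.
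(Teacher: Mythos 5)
Your proof is correct, and it takes a genuinely different route from the paper's. The paper argues variationally: it fixes a metric $M$ attaining the minimum condition number, perturbs it by a symmetric $B$ with $Bg=0$, extracts the first-order optimality condition $\lambda_{\max}(M)\,v_{\min}^\tp Bv_{\min}=\lambda_{\min}(M)\,v_{\max}^\tp Bv_{\max}$, and invokes Lemma~\ref{lemma2} to conclude that $v_{\max}$, $v_{\min}$, $g$ (hence also $y$) are linearly dependent, reducing the problem to the two-dimensional case already settled by the one-parameter family computation. You instead prove the lower bound directly for \emph{every} admissible $M$: from $(M-\lambda I)(M-\Lambda I)\preceq 0$ evaluated at $g$, where $g^\tp M^2g=\|Mg\|^2=\|y\|^2$ is the key extra computable quantity, you obtain $(\Lambda-c\cos\psi)(c\cos\psi-\lambda)\ge c^2\sin^2\psi$ and then minimize $\Lambda/\lambda$ subject to this single scalar constraint — the classical Kantorovich--Wielandt device. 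I checked the algebra (the rearrangement into the product form, the critical point $r^\star=|\sin\psi|/(1+|\sin\psi|)$, $s^\star=|\sin\psi|/(1-|\sin\psi|)$, and the resulting bound) and it is all correct; the degenerate case $\psi=0$ and the strictness $\lambda<c\cos\psi<\Lambda$ are handled properly, and $\cos\psi>0$ is guaranteed by $y^\tp g>0$. What your approach buys: it is shorter, needs neither perturbation theory nor Lemmas~\ref{lemma1}--\ref{lemma2}, and crucially does not presuppose that a minimizer exists — a point the paper's proof passes over silently even though the admissible set of metrics is noncompact. What the paper's approach buys in exchange is structural information about where optimality can occur (the extreme eigenvectors must lie in $\mathrm{span}(g,y)$), which your reserve argument via compression to that subspace would also recover. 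Both proofs correctly delegate attainment to the explicit $\gamma=1$ computation in Lemma~\ref{lemma:M-eigenvalues}.
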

\begin{proof}
    We know that $M=\frac1{y^\tp g}yy^\tp+M'$ for some symmetric positive definite matrix $M'$ on $g^\perp$. Let $M'$ be a matrix attaining the minimum $\kappa(M)$. Consider the perturbation of $M'$ by some symmetric matrix $B$ such that $Bg=0$. Then perturbation theory tells us
    \[
    \lambda_{\max/\min}(M+\epsilon B)=\lambda_{\max/\min}(M)+v_{\max/\min}^\tp Bv_{\max/\min}\epsilon+O(\epsilon^2)
    \]
where $v_{\max/\min}$ are eigenvectors of $M$ with eigenvalue $\lambda_{\max/\min}$ normalized to have norm $1$. Thus
\[
\frac{\lambda_{\max}(M+\epsilon B)}{\lambda_{\min}(M+\epsilon B)}=\frac{\lambda_{\max}(M)+v_{\max}^\tp Bv_{\max}\epsilon}{\lambda_{\min}(M)+v_{\min}^\tp Bv_{\min}\epsilon}+O(\epsilon^2).
\]
Since $M'$ is in particular a local minimum,
\[
\lambda_{\max}(M)\cdot v_{\min}^\tp Bv_{\min}=\lambda_{\min}(M)\cdot v_{\max}^\tp Bv_{\max}.
\]
By Lemma~\ref{lemma2} this implies that $v_{\max}$, $v_{\min}$, and $g$ are linearly dependent. Hence by applying $M$, we see that so are $v_{\max}$, $v_{\min}$, and $y$. So we can restrict ourself to the two-dimensional subspace spanned by $v_{\max}$ and $v_{\min}$. But then the same argument as before shows optimality.
\end{proof}

\paragraph{Optimal Maximum and Minimum Metric Eigenvalues} We show that a lower bound (resp., upper bound) for $\lambda_{\min}(M)$ (resp., $\lambda_{\max}(M)$) for matrices $M$ satisfying the conditions of Proposition~\ref{prop:condition-number} and show that they are never achieved but are asymptotically achieved.

\begin{prop}\label{prop:minimum-eigenvalue}
    Let $M$ be a symmetric positive definite matrix such that $Mg=y$ where $y^\tp g>0$. Let $\psi$ be the angle between $g$ and $y$. Then $\lambda_{\min}(M)<\frac{\|y\|}{\|g\|}\cos(\psi)$. Moreover, the supremum is asymptotically approached by \eqref{eq:M-defn} as $\gamma\to\infty$.
\end{prop}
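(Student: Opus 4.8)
The plan is to get the inequality from the variational (Rayleigh quotient) characterization of $\lambda_{\min}$ and then upgrade ``$\le$'' to ``$<$'' by an eigenvector-rigidity argument, finally reading the asymptotic sharpness straight off Lemma~\ref{lemma:M-eigenvalues}. For the bound itself: since $Mg=y$ and $g^\tp y=\|g\|\,\|y\|\cos(\psi)$, testing the Rayleigh quotient of $M$ against the vector $g$ gives
\[
\lambda_{\min}(M)\ \le\ \frac{g^\tp M g}{g^\tp g}\ =\ \frac{g^\tp y}{\|g\|^2}\ =\ \frac{\|y\|}{\|g\|}\cos(\psi).
\]
So the only real content is strictness. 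If equality held here, then $g$ would be a minimizer of the Rayleigh quotient, hence an eigenvector of $M$ for its smallest eigenvalue, i.e.\ $Mg=\lambda_{\min}(M)\,g$; combined with $Mg=y$ this would force $y=\lambda_{\min}(M)\,g$, a positive multiple of $g$. Thus, as long as $g$ and $y$ are linearly independent (equivalently $\psi\neq0$), the inequality is strict, which is exactly the claim that the supremum is not attained.

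For the asymptotic sharpness I would specialize to the one-parameter family \eqref{eq:M-defn} and invoke Lemma~\ref{lemma:M-eigenvalues}, whose smallest eigenvalue is $\lambda_{\min}(M)=\frac{\|y\|}{2\|g\|\cos(\psi)}\big((1+\gamma)-\sqrt{(1+\gamma)^2-4\gamma\cos^2(\psi)}\big)$. Rationalizing the numerator (multiply and divide by the conjugate) rewrites this as $\frac{\|y\|}{\|g\|}\cdot\dfrac{2\gamma\cos(\psi)}{(1+\gamma)+\sqrt{(1+\gamma)^2-4\gamma\cos^2(\psi)}}$. As $\gamma\to\infty$ the denominator grows like $2\gamma$, so the expression tends to $\frac{\|y\|}{\|g\|}\cos(\psi)$; the same rationalized form (squaring the evident inequality) shows it stays strictly below $\frac{\|y\|}{\|g\|}\cos(\psi)$ for every finite $\gamma$, which both re-confirms strictness for these particular metrics and, together with the previous paragraph, shows no other metric does better. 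Hence $\frac{\|y\|}{\|g\|}\cos(\psi)$ is the supremum and it is approached as $\gamma\to\infty$ but never achieved.

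The step that needs the most care is the strictness claim, and the subtlety is entirely the degenerate alignment $y\parallel g$: there $g$ \emph{is} a minimal eigenvector and the bound $\frac{\|y\|}{\|g\|}\cos(\psi)$ is actually attained (e.g.\ by $M=\frac{\|y\|}{\|g\|}I$, the pure--gradient-descent case $\psi=0$). So the statement should be read under the standing assumption that $g$ and $y$ are not parallel, i.e.\ $\psi\neq0$, consistent with the surrounding discussion; everything else is the short Rayleigh estimate and the direct $\gamma\to\infty$ limit from Lemma~\ref{lemma:M-eigenvalues}.
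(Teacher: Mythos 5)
Your proposal is correct and follows essentially the same route as the paper: test the Rayleigh quotient of $M$ at $v=g$ to get $\lambda_{\min}(M)\le \frac{g^\tp Mg}{g^\tp g}=\frac{\|y\|}{\|g\|}\cos(\psi)$, argue strictness from the fact that equality would force $g$ to be a $\lambda_{\min}$-eigenvector, and read the $\gamma\to\infty$ limit off Lemma~\ref{lemma:M-eigenvalues}. Your additional care about the degenerate case is a genuine (if minor) improvement on the paper's one-line justification ``since $g$ is not an eigenvector of $M$'': that claim silently assumes $y\not\parallel g$, and when $y=cg$ with $c>0$ the choice $M=cI$ attains the bound, so the strict inequality does indeed require $\psi\neq 0$ as you note.
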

\begin{proof}
    Recall that
    \begin{equation}\label{eq:minimum-eigenvalue}
    \lambda_{\min}(M)=\min_{v\ne 0}\frac{v^\tp Mv}{v^\tp v},
    \end{equation}
    and moreover the minimum is reached by eigenvectors with eigenvalue $\lambda_{\min}$. Thus
    \[
    \lambda_{\min}(M)\le \frac{g^\tp Mg}{g^\tp g}=\frac{g^\tp y}{g^\tp g}=\frac{\|y\|}{\|g\|}\cos\left(\psi\right).
    \]
    Moreover, equality is not reached since $g$ is not an eigenvector of $M$. Finally, the limit of the minimum eigenvalue of \eqref{eq:M-defn} as $\gamma\to\infty$ is, by Lemma~\ref{lemma:M-eigenvalues},
    \[
    \lim_{\gamma\to\infty}\frac{\|y\|}{2\|g\|\cos(\psi)}\bigg((1+\gamma)-\sqrt{(1+\gamma)^2-4\gamma\cos^2\left(\psi\right)}\bigg)=\frac{\|y\|}{\|g\|}\cos\left(\psi\right).\qedhere
    \]
\end{proof}
\begin{prop}
    Let $M$ be a symmetric positive definite matrix such that $Mg=y$ where $y^\tp g>0$. Let $\psi$ be the angle between $g$ and $y$. Then $\lambda_{\max}(M)>\frac{\|y\|}{\|g\|\cos\left(\psi\right)}$. Moreover, the infimum is asymptotically approached by \eqref{eq:M-defn} as $\gamma\to0$.
\end{prop}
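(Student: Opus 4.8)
The plan is to deduce this bound from Proposition~\ref{prop:minimum-eigenvalue} by passing to the inverse matrix, exploiting the symmetry of the hypotheses under $g\leftrightarrow y$. If $M$ is symmetric positive definite with $Mg=y$, then $M^{-1}$ is symmetric positive definite with $M^{-1}y=g$; the condition $y^\tp g>0$ is unchanged when $g$ and $y$ are swapped, and the angle between $y$ and $g$ is still $\psi$. So I would apply Proposition~\ref{prop:minimum-eigenvalue} with the substitution $(M,g,y)\mapsto(M^{-1},y,g)$ to obtain $\lambda_{\min}(M^{-1})<\frac{\|g\|}{\|y\|}\cos(\psi)$, and then take reciprocals using $\lambda_{\max}(M)=1/\lambda_{\min}(M^{-1})$, which gives exactly $\lambda_{\max}(M)>\frac{\|y\|}{\|g\|\cos(\psi)}$.

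Concretely, I would write this out as a short Rayleigh-quotient computation mirroring the proof of Proposition~\ref{prop:minimum-eigenvalue}, so that the argument is self-contained: since $Mg=y$ implies $M^{-1}y=g$, we have $y^\tp M^{-1}y=y^\tp g$, hence
\[
\frac{1}{\lambda_{\max}(M)}=\lambda_{\min}(M^{-1})=\min_{v\ne0}\frac{v^\tp M^{-1}v}{v^\tp v}\le\frac{y^\tp M^{-1}y}{y^\tp y}=\frac{y^\tp g}{\|y\|^2}=\frac{\|g\|}{\|y\|}\cos(\psi).
\]
Equality in the Rayleigh quotient would force $y$ to be an eigenvector of $M^{-1}$, hence of $M$, hence --- via $Mg=y$ --- $g$ parallel to $y$, i.e.\ $\psi=0$; away from that degenerate case the inequality is strict, and rearranging yields $\lambda_{\max}(M)>\frac{\|y\|}{\|g\|\cos(\psi)}$.

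For the asymptotic statement I would just quote Lemma~\ref{lemma:M-eigenvalues}: the largest eigenvalue of the metric \eqref{eq:M-defn} equals $\frac{\|y\|}{2\|g\|\cos(\psi)}\big((1+\gamma)+\sqrt{(1+\gamma)^2-4\gamma\cos^2(\psi)}\big)$, which tends to $\frac{\|y\|}{2\|g\|\cos(\psi)}(1+1)=\frac{\|y\|}{\|g\|\cos(\psi)}$ as $\gamma\to0^+$, while \eqref{eq:M-defn} remains positive definite (with strictly larger $\lambda_{\max}$, by the bound just proved) for every $\gamma>0$; hence the infimum is approached but never attained. The only real obstacle in the whole argument is the strictness of the inequality, together with the tacit nondegeneracy assumption $\psi\ne0$ (when $g\parallel y$ one may take $M=\frac{\|y\|}{\|g\|}I$, for which equality holds); I would dispatch this exactly as Proposition~\ref{prop:minimum-eigenvalue} does, by observing that $g$ is not an eigenvector of $M$. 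Everything else is a one-line calculation.
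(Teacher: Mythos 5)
Your proposal is correct and takes essentially the same route as the paper: the paper also evaluates the Rayleigh quotient of $M^{-1}$ at the test vector $y$ (written there as $\lambda_{\max}(M)=\max_{v\ne0}\frac{v^\tp v}{v^\tp M^{-1}v}$, which is just the reciprocal of your $\lambda_{\min}(M^{-1})$ computation), uses $y^\tp M^{-1}y=y^\tp g$, concludes strictness from $y$ not being an eigenvector, and gets the asymptotic claim from Lemma~\ref{lemma:M-eigenvalues} as $\gamma\to0$. Your one substantive addition is the explicit caveat that strictness fails in the degenerate case $g\parallel y$ (where $M=\frac{\|y\|}{\|g\|}I$ attains equality), a point the paper's ``$y$ is not an eigenvector of $M$'' step silently assumes away.
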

\begin{proof}
Recall that (e.g., by using \eqref{eq:minimum-eigenvalue} and observing that $\lambda_{\max}(M)=\lambda_{\min}(M^{-1})^{-1}$)
\[
\lambda_{\max}(M)=\max_{v\ne0}\frac{v^\tp v}{v^\tp M^{-1}v},
\]
and moreover the minimum is reached by the eigenvectors with eigenvalue $\lambda_{\max}$. Thus
\[
\lambda_{\max}(M)\le \frac{y^\tp y}{y^\tp M^{-1}y}=\frac{y^\tp y}{y^\tp g}=\frac{\|y\|}{\|g\|\cos\left(\psi\right)}.
\]
Moreover, equality is not reached since $y$ is not an eigenvector of $M$. Finally, the limit of the maximum eigenvalue of \eqref{eq:M-defn} as $\gamma\to0$ is, by Lemma~\ref{lemma:M-eigenvalues},
    \[
    \lim_{\gamma\to0}\frac{\|y\|}{2\|g\|\cos(\psi)}\bigg((1+\gamma)+\sqrt{(1+\gamma)^2-4\gamma\cos^2\left(\psi\right)}\bigg)=\frac{\|y\|}{\|g\|\cos\left(\psi\right)}.\qedhere
    \]
\end{proof}

\bibliography{biblio,biblio2}

\begin{thebibliography}{31}
\providecommand{\natexlab}[1]{#1}
\providecommand{\url}[1]{\texttt{#1}}
\expandafter\ifx\csname urlstyle\endcsname\relax
  \providecommand{\doi}[1]{doi: #1}\else
  \providecommand{\doi}{doi: \begingroup \urlstyle{rm}\Url}\fi

\bibitem[Bredenberg and Savin(2024)]{bredenberg2024desiderata}
Colin Bredenberg and Cristina Savin.
\newblock Desiderata for normative models of synaptic plasticity.
\newblock \emph{Neural Computation}, 36\penalty0 (7):\penalty0 1245--1285, 2024.

\bibitem[Richards and Kording(2023)]{richards2023study}
Blake~Aaron Richards and Konrad~Paul Kording.
\newblock The study of plasticity has always been about gradients.
\newblock \emph{The Journal of Physiology}, 601\penalty0 (15):\penalty0 3141--3149, 2023.

\bibitem[Cauchy et~al.(1847)]{cauchy1847methode}
Augustin Cauchy et~al.
\newblock M{\'e}thode g{\'e}n{\'e}rale pour la r{\'e}solution des systemes d’{\'e}quations simultan{\'e}es.
\newblock \emph{Comp. Rend. Sci. Paris}, 25\penalty0 (1847):\penalty0 536--538, 1847.

\bibitem[Nocedal and Wright(1999)]{nocedal1999numerical}
Jorge Nocedal and Stephen~J Wright.
\newblock \emph{Numerical optimization}.
\newblock Springer, 1999.

\bibitem[Surace et~al.(2018)Surace, Pfister, Gerstner, and Brea]{suraceChoiceMetricGradientbased2018}
Simone~Carlo Surace, Jean-Pascal Pfister, Wulfram Gerstner, and Johanni Brea.
\newblock On the choice of metric in gradient-based theories of brain function, December 2018.

\bibitem[Lillicrap et~al.(2016)Lillicrap, Cownden, Tweed, and Akerman]{lillicrap2016random}
Timothy~P Lillicrap, Daniel Cownden, Douglas~B Tweed, and Colin~J Akerman.
\newblock Random synaptic feedback weights support error backpropagation for deep learning.
\newblock \emph{Nature communications}, 7\penalty0 (1):\penalty0 13276, 2016.

\bibitem[Amari(1998)]{amari1998natural}
Shun-Ichi Amari.
\newblock Natural gradient works efficiently in learning.
\newblock \emph{Neural computation}, 10\penalty0 (2):\penalty0 251--276, 1998.

\bibitem[Muehlebach and Jordan(2019)]{muehlebach2019dynamical}
Michael Muehlebach and Michael Jordan.
\newblock A dynamical systems perspective on nesterov acceleration.
\newblock In \emph{International Conference on Machine Learning}, pages 4656--4662. PMLR, 2019.

\bibitem[Ahmadi and Parrilo(2008)]{ahmadi2008non}
Amir~Ali Ahmadi and Pablo~A Parrilo.
\newblock Non-monotonic lyapunov functions for stability of discrete time nonlinear and switched systems.
\newblock In \emph{2008 47th IEEE conference on decision and control}, pages 614--621. IEEE, 2008.

\bibitem[Clarke(1975)]{clarke1975generalized}
Frank~H Clarke.
\newblock Generalized gradients and applications.
\newblock \emph{Transactions of the American Mathematical Society}, 205:\penalty0 247--262, 1975.

\bibitem[Horn and Johnson(2012)]{horn2012matrix}
Roger~A Horn and Charles~R Johnson.
\newblock \emph{Matrix analysis}.
\newblock Cambridge university press, 2012.

\bibitem[Kakade(2001)]{kakade2001natural}
Sham~M Kakade.
\newblock A natural policy gradient.
\newblock \emph{Advances in neural information processing systems}, 14, 2001.

\bibitem[Pascanu(2013)]{pascanu2013revisiting}
R~Pascanu.
\newblock Revisiting natural gradient for deep networks.
\newblock \emph{arXiv preprint arXiv:1301.3584}, 2013.

\bibitem[Martens and Grosse(2015)]{martens2015optimizing}
James Martens and Roger Grosse.
\newblock Optimizing neural networks with kronecker-factored approximate curvature.
\newblock In \emph{International conference on machine learning}, pages 2408--2417. PMLR, 2015.

\bibitem[Martens(2020)]{martens2020new}
James Martens.
\newblock New insights and perspectives on the natural gradient method.
\newblock \emph{Journal of Machine Learning Research}, 21\penalty0 (146):\penalty0 1--76, 2020.

\bibitem[Dangel et~al.(2024)Dangel, M{\"u}ller, and Zeinhofer]{dangel2024kronecker}
Felix Dangel, Johannes M{\"u}ller, and Marius Zeinhofer.
\newblock Kronecker-factored approximate curvature for physics-informed neural networks.
\newblock \emph{arXiv preprint arXiv:2405.15603}, 2024.

\bibitem[Wensing and Slotine(2020)]{wensingConvexityContractionGlobal2020}
Patrick~M. Wensing and Jean-Jacques~E. Slotine.
\newblock Beyond {{Convexity}} -- {{Contraction}} and {{Global Convergence}} of {{Gradient Descent}}.
\newblock \emph{arXiv:1806.06655 [math]}, August 2020.

\bibitem[Ollivier et~al.(2017)Ollivier, Arnold, Auger, and Hansen]{ollivier2017information}
Yann Ollivier, Ludovic Arnold, Anne Auger, and Nikolaus Hansen.
\newblock Information-geometric optimization algorithms: A unifying picture via invariance principles.
\newblock \emph{Journal of Machine Learning Research}, 18\penalty0 (18):\penalty0 1--65, 2017.

\bibitem[Lee et~al.(2018)Lee, Kwon, and Park]{lee2018natural}
Taeyoon Lee, Jaewoon Kwon, and Frank~C Park.
\newblock A natural adaptive control law for robot manipulators.
\newblock In \emph{2018 IEEE/RSJ International Conference on Intelligent Robots and Systems (IROS)}, pages 1--9. IEEE, 2018.

\bibitem[Boffi and Slotine(2021)]{boffi2021implicit}
Nicholas~M Boffi and Jean-Jacques~E Slotine.
\newblock Implicit regularization and momentum algorithms in nonlinearly parameterized adaptive control and prediction.
\newblock \emph{Neural Computation}, 33\penalty0 (3):\penalty0 590--673, 2021.

\bibitem[Tzen et~al.(2023)Tzen, Raj, Raginsky, and Bach]{tzen2023variational}
Belinda Tzen, Anant Raj, Maxim Raginsky, and Francis Bach.
\newblock Variational principles for mirror descent and mirror langevin dynamics.
\newblock \emph{IEEE Control Systems Letters}, 2023.

\bibitem[Pogodin et~al.(2023)Pogodin, Cornford, Ghosh, Gidel, Lajoie, and Richards]{pogodinSynapticWeightDistributions2023}
Roman Pogodin, Jonathan Cornford, Arna Ghosh, Gauthier Gidel, Guillaume Lajoie, and Blake Richards.
\newblock Synaptic {{Weight Distributions Depend}} on the {{Geometry}} of {{Plasticity}}, May 2023.

\bibitem[Hazan et~al.(2016)]{hazan2016introduction}
Elad Hazan et~al.
\newblock Introduction to online convex optimization.
\newblock \emph{Foundations and Trends{\textregistered} in Optimization}, 2\penalty0 (3-4):\penalty0 157--325, 2016.

\bibitem[Rudin et~al.(1964)]{rudin1964principles}
Walter Rudin et~al.
\newblock \emph{Principles of mathematical analysis}, volume~3.
\newblock McGraw-hill New York, 1964.

\bibitem[Gonzalez(1996)]{gonzalez1996time}
Oscar Gonzalez.
\newblock Time integration and discrete hamiltonian systems.
\newblock \emph{Journal of Nonlinear Science}, 6:\penalty0 449--467, 1996.

\bibitem[McLachlan et~al.(1999)McLachlan, Quispel, and Robidoux]{mclachlan1999geometric}
Robert~I McLachlan, G~Reinout~W Quispel, and Nicolas Robidoux.
\newblock Geometric integration using discrete gradients.
\newblock \emph{Philosophical Transactions of the Royal Society of London. Series A: Mathematical, Physical and Engineering Sciences}, 357\penalty0 (1754):\penalty0 1021--1045, 1999.

\bibitem[Brockett(2015)]{brockett2015finite}
Roger~W Brockett.
\newblock \emph{Finite dimensional linear systems}.
\newblock SIAM, 2015.

\bibitem[Virtanen et~al.(2020)Virtanen, Gommers, Oliphant, Haberland, Reddy, Cournapeau, Burovski, Peterson, Weckesser, Bright, {van der Walt}, Brett, Wilson, Millman, Mayorov, Nelson, Jones, Kern, Larson, Carey, Polat, Feng, Moore, {VanderPlas}, Laxalde, Perktold, Cimrman, Henriksen, Quintero, Harris, Archibald, Ribeiro, Pedregosa, {van Mulbregt}, and {SciPy 1.0 Contributors}]{2020SciPy-NMeth}
Pauli Virtanen, Ralf Gommers, Travis~E. Oliphant, Matt Haberland, Tyler Reddy, David Cournapeau, Evgeni Burovski, Pearu Peterson, Warren Weckesser, Jonathan Bright, St{\'e}fan~J. {van der Walt}, Matthew Brett, Joshua Wilson, K.~Jarrod Millman, Nikolay Mayorov, Andrew R.~J. Nelson, Eric Jones, Robert Kern, Eric Larson, C~J Carey, {\.I}lhan Polat, Yu~Feng, Eric~W. Moore, Jake {VanderPlas}, Denis Laxalde, Josef Perktold, Robert Cimrman, Ian Henriksen, E.~A. Quintero, Charles~R. Harris, Anne~M. Archibald, Ant{\^o}nio~H. Ribeiro, Fabian Pedregosa, Paul {van Mulbregt}, and {SciPy 1.0 Contributors}.
\newblock {{SciPy} 1.0: Fundamental Algorithms for Scientific Computing in Python}.
\newblock \emph{Nature Methods}, 17:\penalty0 261--272, 2020.
\newblock \doi{10.1038/s41592-019-0686-2}.

\bibitem[N{\o}kland(2016)]{nokland2016direct}
Arild N{\o}kland.
\newblock Direct feedback alignment provides learning in deep neural networks.
\newblock \emph{Advances in neural information processing systems}, 29, 2016.

\bibitem[Launay et~al.(2020)Launay, Poli, Boniface, and Krzakala]{launay2020direct}
Julien Launay, Iacopo Poli, Fran{\c{c}}ois Boniface, and Florent Krzakala.
\newblock Direct feedback alignment scales to modern deep learning tasks and architectures.
\newblock \emph{Advances in neural information processing systems}, 33:\penalty0 9346--9360, 2020.

\bibitem[B{\'a}rta et~al.(2012)B{\'a}rta, Chill, and Fa{\v{s}}angov{\'a}]{barta2012every}
Tom{\'a}{\v{s}} B{\'a}rta, Ralph Chill, and Eva Fa{\v{s}}angov{\'a}.
\newblock Every ordinary differential equation with a strict lyapunov function is a gradient system.
\newblock \emph{Monatshefte f{\"u}r Mathematik}, 166\penalty0 (1):\penalty0 57--72, 2012.

\end{thebibliography}

\end{document}